\begin{document}

\newcommand{\tabincell}[2]{\begin{tabular}{@{}#1@{}}#2\end{tabular}}

\date{}

\title{Convexification of Neural Graph}
\date{}

\author{Han Xiao$^\spadesuit$ \\
	State Key Lab. of Intelligent Technology and Systems, \\ 
	National Lab. for Information Science and Technology, \\
	Dept. of Computer Science and Technology, Tsinghua  University, Beijing 100084, PR China
	\\ Almighty.Xiao.Han@iCloud.com; 
	\\$^\spadesuit$Correspondence Author: \url{http://www.semantics.top}
}

\maketitle

\begin{abstract}
Traditionally, most complex intelligence architectures are extremely non-convex, which could not be well performed by convex optimization. However, this paper decomposes complex structures into three types of nodes: operators, algorithms and functions. Iteratively, propagating from node to node along edge, we prove that ``regarding the tree-structured neural graph, it is nearly convex in each variable, when the other variables are fixed.'' In fact, the non-convex properties stem from circles and functions, which could be transformed to be convex with our proposed \textit{\textbf{scale mechanism}}. Experimentally, we justify our theoretical analysis by two practical applications.
\end{abstract}

\section{Introduction}
Neural graph\footnote{Notably, if you are not familiar with neural graph, just treat it as usual neural network and ignore any algorithm-related parts of this paper.} is the intelligence architecture, which is characterized by both logics and neurons, \cite{xiao2017ndt}. This definition means that traditional algorithms (e.g. \textit{max flow method, A$^*$ searching, etc.}) could be embedded into neural architectures by the proposed principle of \cite{xiao2017hungarian} as a dynamically graph-constructing process. However, regarding the objective of neural graph, the landscape is still unclear, thus the issue of optimization is temporarily an obstacle towards the complete theoretical foundation. Therefore, in this paper, we analyze the convex property of neural graph and provide a transformation methodology to reform the neural graph as a convex structure. 

Mathematically, there exist three components in neural graph, namely operators, algorithms and functions. Operators indicate four element-wise arithmetic operations (i.e. \textit{plus, minus, multiplication} and \textit{division}), convolution and matrix multiplication. Algorithm means logics-based instructions, while function corresponds to element-wise mathematic mapping (e.g. $x^2, \frac{1}{1+e^{-x}}$). 

On one hand, neural graph is organized as a graph. The node corresponds to operator, algorithm or function, while the edge links from the inputs to the output of corresponding operations. On the other hand, the convexity is characterized by the inequality of second-order form as (\ref{cond}) shows. \textit{\textbf{In essence, if we iteratively ensure the corresponding second-order condition of (\ref{cond}) in each node along the edge as backward propagation, the convex property of entire graph is guaranteed.}}  For the example of Figure \ref{fig:igraph}, to begin, we ensure the condition of (\ref{cond}) for the cross entropy, then we iteratively propagate the guarantee of (\ref{cond}) from objective to variables. In spirit, this process is similar as using backward propagation to work out the second-order form. Specifically, regarding the weight $W$, the satisfaction of (\ref{cond}) means it is convex relative to the objective of cross entropy.

\begin{figure}[H]
	\centering
	\includegraphics[width=\linewidth]{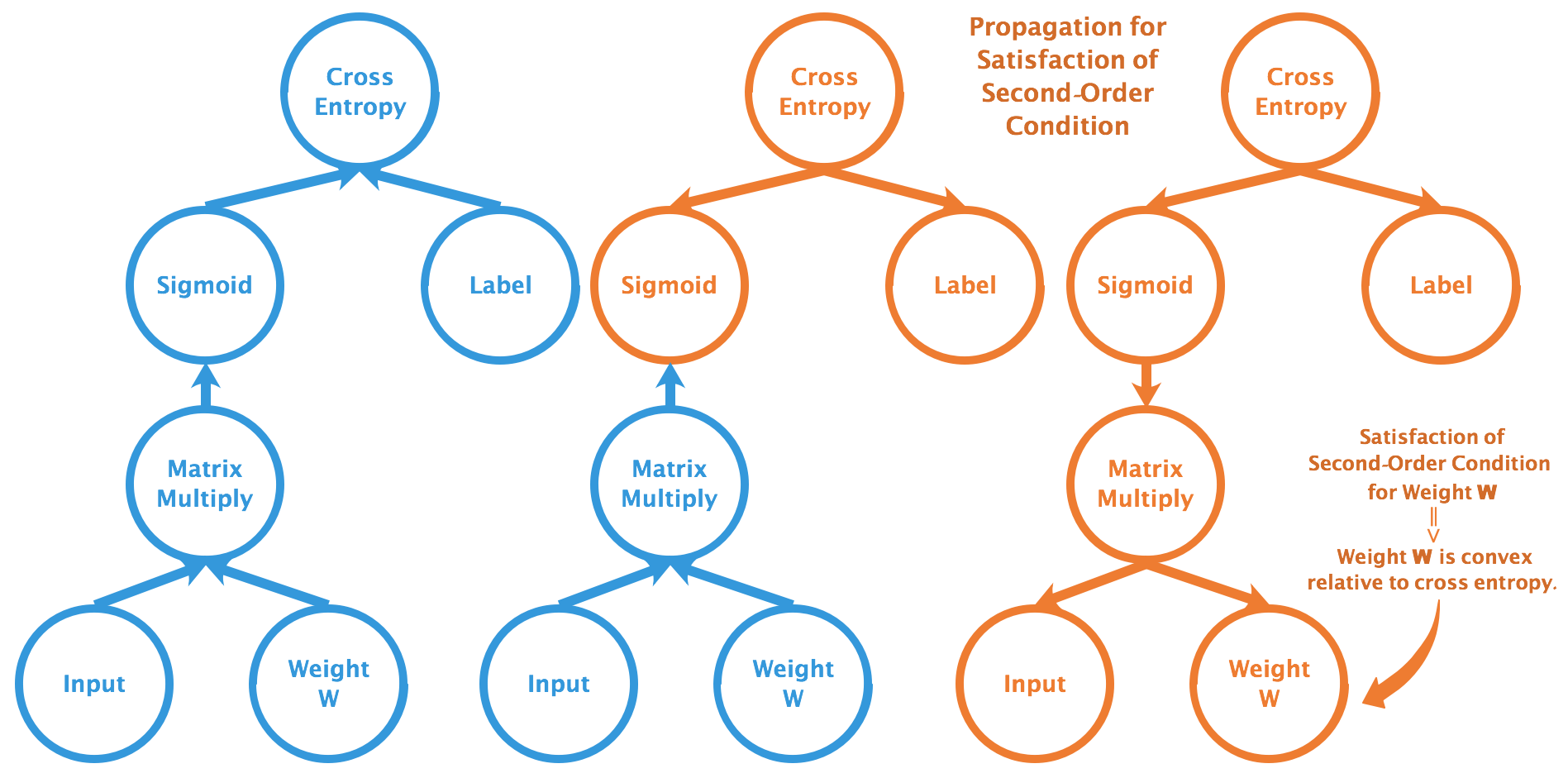}
	\caption{Propagation for Satisfaction of Second-Order Condition (Inequality (\ref{cond})). When the process propagates to the weight, it is proved to be convex relative to the cross entropy.}
	\label{fig:igraph}
\end{figure}

Notably, the algorithm plays a role of selection, which constructs a graph after the forward propagation, \cite{xiao2017hungarian}. Thus, if the constructed graph ensures the convex condition of (\ref{cond}), the objective of neural graph with algorithms is convex to each variable. Besides, neural graph involves few operations of element-wise minus and division, hence we package the corresponding parts into functions. For example, we analyze the square loss $(x-a)^2$ as an entire function, rather than a ``square multiplication'' with a ``minus'' node.

In this paper, we prove that \textbf{\textit{``regarding the tree-structured neural graph, it is convex in each variable, when the other variables are fixed, if the function nodes satisfy the convexification inequality.''}} The circle in neural graph indicates the structure where several variates for a node are functions of a specific symbol. For instance, in the graph of $y=W\sigma(Wx)$, $y$ as a multiplication node has two variates $\sigma(Wx)$ and $W$, both of which are functions of the symbol $W$, making a circle. From our proofs, we could conclude that the non-convex property only originates from circles and non-linear functions. Thus, we propose a factor scaling transformation that \textit{\textbf{scale mechanism}}, which is the technique to modify the neural graph slightly for making it convex.

Finally, as \cite{Kawaguchi2016Deep} suggests, we conjecture that ``every local minimum is nearly a global minimum for conventional neural graph'' without proofs. Empirically, if there are indeed some poor local minimums, the multiple runs of a single graph may lead to many diverse results. However, nearly all the neural architectures could be repeated with reported accuracies, which implies our conjecture. 

\textbf{Contributions.} This paper contributes in two aspects. \textbf{(1.)} We prove the non-convexity of neural graph only stems from circles and non-linear functions. \textbf{(2.)} We provide a simple methodology that scaling the non-linear functions for convexification of neural graph. 

\section{Related Work}
Neural architectures are attractive in terms of its generalization properties, \cite{Mhaskar2016Learning}, while it also introduces non-convex issues in optimization, \cite{Kawaguchi2016Deep}. However, \cite{Katta1987Some} has proved that discovering the global minimum of a general non-convex problem is NP-complete. Thus, we intuitively expect neural graph is nearly convex in nature, which could benefit from convex optimization such as AdaDelta \cite{Zeiler2012ADADELTA}.  

Current researches only focus on linear or non-linear multiple layer perceptions (MLP). For example, \cite{Baldi1988Linear} attempts to study the optimization properties of shadow linear network, while \cite{Goodfellow-et-al-2016} and \cite{Choromanska2015Open} address the conjecture and open problem for deep linear and non-linear network, respectively. Further, \cite{Kawaguchi2016Deep} provides an exact path to these issues with some assumptions.

However, practical neural graphs are more complex than simple MLP, such as LSTM-based models \cite{Wang2017Bilateral}\cite{Yih2013Question} \cite{tan2015lstm} and CNN-based methods \cite{He2015Deep} \cite{Kaiming2016Identity} \cite{Yin2015Convolutional}  \cite{Romero2014FitNets}. Thus, convexity analysis should be extended to conventional situations, which is the target of this paper and is barely focused by our community.
 
\section{Notation, Gradient and Convexity}
\label{ein}
Without loss of generality, we adopt matrix as our basic numeric type for each node. Thus, the second-order item has the form of four-subscripted array, such as $a_{iji'j'}$. For brevity, \textbf{\textit{Einstein's summation convention}} is applied as a protocol for tensor analysis, which specifics an equation without sum symbol (i.e. $\sum$). \textit{\textbf{In detail, we sum out the variables with the subscript that does not exist in the other side.}} For example, 
\begin{eqnarray}
a_{ij}b_{jmn} = c_{imp} \xLongrightarrow{for} \sum_{j,n} a_{ij}b_{jmn} = \sum_{p} c_{imp}
\end{eqnarray}
In this paper, we apply this notation by default.

The objective of neural graph (i.e. $\mathcal{E}$) is a scalar. Hence, the first-order gradient of objective relative to a specific variable node is a matrix, while the second-order is a four-subscripted array, which are defined as below:
\begin{eqnarray}
\frac{\partial \mathcal{E}}{\partial A} &=& \left\{ \frac{\partial \mathcal{E}}{\partial a_{ij}}\right\}_{ij} \in \mathbb{R}^{n \times m} \\ 
\frac{\partial^2 \mathcal{E}}{\partial A^2} &=& \left\{ \frac{\partial^2 \mathcal{E}}{\partial a_{ij} \partial a_{i'j'}} \right\}_{iji'j'} \in \mathbb{R}^{n \times m \times n \times m} \nonumber
\end{eqnarray} 
where $A \in \mathbb{R}^{n \times m}$ is the matrix of node. Based on the definition of gradient, we expand the scalar objective with Taylor series (using Einstein's convention), such as:
\begin{eqnarray}
f(X) & = & f(X_0) + (\frac{\partial \mathcal{E}}{\partial X})_{ij}(X-X_0)_{ij} +  \\
& & (\frac{\partial^2 \mathcal{E}}{\partial X^2})_{iji'j'}(X-X_0)_{ij}(X-X_0)_{i'j'} + \mathcal{O} \nonumber
\end{eqnarray}
Besides, we have the \textit{\textbf{convex definition}} of neural graph of objective function as below \cite{Boyd2013Convex}:
\begin{eqnarray}
f(X) \ge f(X_0) + (\frac{\partial \mathcal{E}}{\partial X})_{ij}(X-X_0)_{ij} 
\end{eqnarray}
which implies that the second-order item (with Einstein convention) is larger than zero as 
\begin{eqnarray}
(\frac{\partial^2 \mathcal{E}}{\partial X^2})_{iji'j'}(X-X_0)_{ij}(X-X_0)_{i'j'} \ge 0
\end{eqnarray}
We reform this critical second-order condition in the form of matrix as:
\begin{eqnarray}
\mathbf{(X-X_0)}_j (\frac{\partial^2 \mathcal{E}}{\partial A^2})_{jj'} \mathbf{(X-X_0)}_{j'}^T \ge 0 
\label{cond}
\end{eqnarray}
where $\mathbf{(X-X_0)}_j$ is a vector of $\mathbb{R}^{1 \times n}$ indexed by $j$, and $(\frac{\partial^2 \mathcal{E}}{\partial A^2})_{jj'}$ is a matrix of $\mathbb{R}^{n \times n}$ indexed by $j,j'$. Notably, there exist a sum symbol with the subscript $j,j'$, that is omitted. However, for clarity, we present the condition (\ref{cond}) with the full formulation as below. We call this inequality as \textit{\textbf{convex condition}}.
\begin{eqnarray}
\sum_{j=1}^{m} \sum_{j'=1}^{m} \mathbf{(X-X_0)}_j (\frac{\partial^2 \mathcal{E}}{\partial A^2})_{jj'} \mathbf{(X-X_0)}_{j'}^T \ge 0 
\end{eqnarray}

\section{Proof of Convexification}
In this section, we sequentially prove that operators, functions and algorithms guarantee the condition of (\ref{cond}), if the successor node has ensured this condition. 

\newtheorem{theorem}{Theorem}

\begin{theorem}
Assuming there is no circle in graph and the successor node ensures the condition of (\ref{cond}), the nodes of element-wise plus operator $+$, element-wise multiplication operator $\otimes$, convolution $*$ and matrix multiplication operator also guarantee this condition.
\end{theorem}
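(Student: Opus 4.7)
The plan is to exploit a single structural observation: each of the four operators listed is linear in every one of its arguments, provided the others are held fixed (which is exactly the no-circle hypothesis). Because linearity means the second derivative of the output with respect to the input vanishes, the chain rule collapses the Hessian of $\mathcal{E}$ with respect to the input node into a pure pullback of the Hessian with respect to the output node. I will therefore prove the theorem by writing, for each operator, the Jacobian $J$ of the output $Y$ with respect to the input $A$, observing that
\begin{eqnarray}
\frac{\partial^2 \mathcal{E}}{\partial a_{ij}\partial a_{i'j'}} \;=\; J_{ij,kl}\, J_{i'j',k'l'}\, \frac{\partial^2 \mathcal{E}}{\partial y_{kl}\partial y_{k'l'}},
\end{eqnarray}
and verifying that the convex condition (\ref{cond}) at $A$ reduces to the convex condition at $Y$ evaluated on the displacement $Y-Y_{0}=J(A-A_{0})$, which is nonnegative by the inductive hypothesis on the successor.

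First I would dispatch the two easy operators. For element-wise plus $Y=A+B$, the Jacobian is the identity tensor $\delta_{ik}\delta_{jl}$, so the Hessian with respect to $A$ equals the Hessian with respect to $Y$ and (\ref{cond}) transfers verbatim. For element-wise multiplication $y_{ij}=a_{ij}b_{ij}$ with $B$ fixed, the Jacobian is diagonal with entry $b_{ij}$, so the second-order form for $A$ is exactly the form at $Y$ evaluated on the displacement vector with entries $b_{ij}(A-A_{0})_{ij}$, i.e.\ on $Y-Y_{0}$, which is nonnegative.

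Next I would treat matrix multiplication and convolution in the same spirit. For $Y=AB$ with $B$ fixed, $\partial y_{ij}/\partial a_{pq}=\delta_{ip}b_{qj}$, and a direct index manipulation shows that the left-hand side of (\ref{cond}) for $A$ equals the left-hand side of (\ref{cond}) for $Y$ evaluated on the displacement $(A-A_{0})B$, which equals $Y-Y_{0}$. For convolution $Y=A*B$ the same argument goes through because convolution is bilinear: $Y-Y_{0}=(A-A_{0})*B$, and linearity in $A$ again kills the second derivatives of $Y$ with respect to $A$, so the Hessian pullback formula above still holds. Symmetry in the two arguments then handles the case in which $A$ is fixed and $B$ varies.

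The main obstacle I expect is purely bookkeeping: keeping the Einstein-convention indices straight under convolution, where the Jacobian mixes spatial indices nontrivially and a careless contraction can obscure the clean identity $J(A-A_{0})=Y-Y_{0}$. Once that identity is written out for each operator, the theorem follows in one line from the successor's convex condition; no further analytic content is needed, and the no-circle assumption enters only to justify treating ``the other variable'' as a constant tensor rather than as another function of $A$.
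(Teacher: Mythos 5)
Your proposal is correct and follows essentially the same route as the paper: for each operator you compute the (constant, by the no-circle hypothesis) Jacobian of the output in the input, note that linearity kills the second-derivative term in the chain rule so the Hessian of $\mathcal{E}$ is a pure pullback, and then evaluate the successor's quadratic form on the transformed displacement $J(A-A_0)=Y-Y_0$, which is nonnegative by hypothesis. The paper carries out exactly these index computations case by case (its equations for $\partial^2\mathcal{E}/\partial a^2$ in terms of $\partial^2\mathcal{E}/\partial c^2$ are your pullback formula), and your explicit identification of the pulled-back displacement with $Y-Y_0$ is if anything a cleaner justification than the paper's appeal to ``the arbitrariness of the vector $(X-X_0)$.''
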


\begin{proof} 
	Notably, we employ Einstein's convention for a unified representation of operators, which refers to Section \ref{ein}.
	
	\textbf{(1.)} Firstly, we prove the case of plus. Let $c_{ij} = a_{ij} + b_{ij}$, we have:
	\begin{eqnarray}
		(\frac{\partial^2 \mathcal{E}}{\partial a^2})_{iji'j'} = (\frac{\partial^2 \mathcal{E}}{\partial c^2})_{iji'j'}
	\end{eqnarray}
	Therefore, the conclusion is obvious.
	
	\textbf{(2.)} Secondly, we discuss the case of element-wise multiplication. Let $c_{ij} = a_{ij} b_{ij}$. Since there is no circle that $b_{ij}$ is not a function of $a_{ij}$, we have:
	\begin{eqnarray}
	(\frac{\partial \mathcal{E}}{\partial a})_{ij} = (\frac{\partial \mathcal{E}}{\partial c})_{ij}  b_{ij},
	\;\; (\frac{\partial^2 \mathcal{E}}{\partial a^2})_{iji'j'} = (\frac{\partial^2 \mathcal{E}}{\partial c^2})_{iji'j'} b_{ij}  b_{i'j'}
	\end{eqnarray}
	Substitute into the right side of (\ref{cond}).
	\begin{eqnarray}
	& &	\mathbf{(X-X_0)}_j (\frac{\partial^2 \mathcal{E}}{\partial a^2})_{jj'} \mathbf{(X-X_0)}^T_{j'} \label{em-last} \\
	& = &	(\mathbf{b}_j \otimes \mathbf{(X-X_0)}_j) (\frac{\partial^2 \mathcal{E}}{\partial c^2})_{jj'}
	(\mathbf{(X-X_0)}_{j'}^T \otimes \mathbf{b}_{j'}^T) \ge 0 \nonumber 
	\end{eqnarray}
 	The inequation of (\ref{em-last}) holds because of the arbitrariness of vector $\mathbf{(X-X_0)}$.
 	
 	\textbf{(3.)} Thirdly, we explore the convolution. Let $c_{ij} = a_{pq}*b_{mn}$. Since there is no circle that $b_{mn}$ is not a function of $a_{pq}$, we have:
 	\begin{eqnarray}
 	& (\frac{\partial \mathcal{E}}{\partial a})_{pq} = (\frac{\partial \mathcal{E}}{\partial c})_{ij} * b_{mn} \\
 	& (\frac{\partial^2 \mathcal{E}}{\partial a^2})_{pqp'q'} = (\frac{\partial^2 \mathcal{E}}{\partial \nonumber c^2})_{iji'j'} * b_{mn} * b_{m'n'} 
 	\end{eqnarray}
 	where the convolutions of $b_{mn}, b_{m'n'}$ respectively make effect on the corresponding dimensions of the second-order of $c$.
 	Substitute into the right side of (\ref{cond}):
 	\begin{eqnarray}
 	& &	\mathbf{(X-X_0)}_q (\frac{\partial^2 \mathcal{E}}{\partial a^2})_{qq} \mathbf{(X-X_0)}^T_{q'} \label{conv-last} \\
 	& = &	(\mathbf{(X-X_0)*B})_j (\frac{\partial^2 \mathcal{E}}{\partial c^2})_{jj'} (\mathbf{(X-X_0)*B})_{j'}^T \ge 0 \nonumber
 	\end{eqnarray}
 	Because of the arbitrariness of vector $\mathbf{(X-X_0)}$, our conclusion of (\ref{conv-last}) is established.
 	
 	\textbf{(4.)} Finally, we deal with the situation of matrix multiplication. Let $c_{ij} = a_{ik} b_{kj}$. Since there is no circle that $b_{ij}$ is not a function of $a_{ij}$, we have:
	\begin{eqnarray}
	(\frac{\partial \mathcal{E}}{\partial a})_{ik} = (\frac{\partial \mathcal{E}}{\partial c})_{ij} b_{kj}, \;\;
	(\frac{\partial^2 \mathcal{E}}{\partial a^2})_{iki'k'} = (\frac{\partial^2 \mathcal{E}}{\partial \nonumber c^2})_{iji'j'}  b_{kj}  b_{k'j'} 
	\end{eqnarray}
	Substitute into the right side of (\ref{cond}):
	\begin{eqnarray}
	& &	\mathbf{(X-X_0)}_k (\frac{\partial^2 \mathcal{E}}{\partial a^2})_{kk'} \mathbf{(X-X_0)}^T_{k'} \label{mm-last} \\
	& = &	(\mathbf{(X-X_0)B})_j (\frac{\partial^2 \mathcal{E}}{\partial c^2})_{jj'} (\mathbf{(X-X_0)B})_{j'}^T \ge 0 \nonumber
	\end{eqnarray}
	In the inequation of (\ref{mm-last}), $\mathbf{B}$ is the matrix of $\{b_{kj}\}$ and $\mathbf{(X-X_0)} = \{(X-X_0)_{ik}\}$. Because of the arbitrariness of vector $\mathbf{(X-X_0)}$, the proof is end.
\end{proof}
Intuitively, this theorem represents that operator nodes keep the convexity in an iterative manner along the graph edge. Next, we will prove the situation of functions. Without loss of generality, we treat the functions in batch mode, where each column corresponds to an independent sample. This assumption accords to the practical situations, which does not harm the conclusion of this paper.

\begin{theorem}
	Assuming the successor node ensures the condition of (\ref{cond}) and each column corresponds to an independent sample, the function node $c=s(X)$ also guarantees this condition if the \textbf{convexification inequality} holds as 
	\begin{eqnarray}
	\forall j \in [1...m]:\;\;\lambda_{min}(\frac{\partial^2 \mathcal{E}}{\partial c^2})_{jj} + \min_{i=1}^n \frac{(\frac{\partial \mathcal{E}}{\partial c})_{i; j}s''_{ii; jj}}{(s'_{i;j})^2} \ge 0
	\end{eqnarray}
\end{theorem}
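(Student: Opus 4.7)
The plan is to push the convex condition through the element-wise map by the chain rule, then exploit the column-wise independence of samples to decouple the quadratic form into one independent quadratic form per column, and finally reduce each column to a positive semi-definiteness question whose natural sufficient criterion is exactly the stated convexification inequality.

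Concretely, I would first record, for $c_{ij}=s(x_{ij})$, the chain-rule identities
\begin{align*}
\left(\frac{\partial \mathcal{E}}{\partial X}\right)_{ij} &= \left(\frac{\partial \mathcal{E}}{\partial c}\right)_{ij} s'(x_{ij}),\\
\left(\frac{\partial^2 \mathcal{E}}{\partial X^2}\right)_{iji'j'} &= \left(\frac{\partial^2 \mathcal{E}}{\partial c^2}\right)_{iji'j'} s'(x_{ij})\, s'(x_{i'j'}) + \delta_{ii'}\delta_{jj'}\left(\frac{\partial \mathcal{E}}{\partial c}\right)_{ij} s''(x_{ij}),
\end{align*}
and substitute them into the left side of (\ref{cond}) with $A$ replaced by $X$.

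Next, since each column is an independent sample, $\mathcal{E}$ splits as a sum over columns, so $(\partial^2 \mathcal{E}/\partial c^2)_{iji'j'}=0$ whenever $j\neq j'$. Combined with the Kronecker $\delta_{jj'}$ already present in the second term, this collapses the double sum over $(j,j')$ into a single sum over $j$, so the second-order form becomes $\sum_j Q_j$ where
\begin{equation*}
Q_j=\sum_{i,i'}(X-X_0)_{ij}(X-X_0)_{i'j}\left(\frac{\partial^2 \mathcal{E}}{\partial c^2}\right)_{ii';jj} s'(x_{ij})\, s'(x_{i'j})+\sum_i \left(\frac{\partial \mathcal{E}}{\partial c}\right)_{ij} s''(x_{ij})(X-X_0)_{ij}^2.
\end{equation*}
The key column-wise substitution is then $u_i=(X-X_0)_{ij}\, s'(x_{ij})$; assuming $s'(x_{ij})\neq 0$ this is a bijection on $\mathbb{R}^n$ and $(X-X_0)_{ij}^2=u_i^2/(s'(x_{ij}))^2$, which rewrites $Q_j=u^\top(M_j+D_j)\,u$ with $M_j=(\partial^2\mathcal{E}/\partial c^2)_{jj}\in\mathbb{R}^{n\times n}$ and $D_j$ a diagonal matrix whose entries are $(D_j)_{ii}=(\partial\mathcal{E}/\partial c)_{ij}\, s''(x_{ij})/(s'(x_{ij}))^2$.

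It then suffices to show $M_j+D_j\succeq 0$ for every $j$. Applying Weyl's inequality for Hermitian matrices together with $\lambda_{\min}(D_j)=\min_i(D_j)_{ii}$ yields $\lambda_{\min}(M_j+D_j)\ge \lambda_{\min}(M_j)+\min_i(D_j)_{ii}$, and the right-hand side is non-negative by the stated convexification inequality, so $Q_j\ge 0$ and hence the whole second-order form is non-negative. The main obstacle I anticipate is the cross-column reduction: without the per-column independence assumption, the off-diagonal blocks of $\partial^2\mathcal{E}/\partial c^2$ would not vanish, and the purely diagonal correction $D_j$ is too local to compensate for them; a secondary technicality is the degenerate case $s'(x_{ij})=0$, which should be handled either by excluding saturating points from the hypothesis or by a standard limit argument.
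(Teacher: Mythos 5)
Your proposal is correct and follows the same overall strategy as the paper --- chain rule, per-column decoupling via the independence assumption, and a minimum-eigenvalue comparison --- but your execution of the final step is genuinely different and, in fact, tighter than the paper's. The paper finishes by eigendecomposing $(\frac{\partial^2\mathcal{E}}{\partial c^2})_{jj} = \mathbf{U_j^T\Lambda_j U_j}$ and then asserting that the whole per-column Hessian equals $\mathbf{U_j^T}(\Lambda_{ii;j}\, s'_{i;j}s'_{i;j} + \mathrm{diag}(\cdot))\mathbf{U_j}$; as written this conflates conjugation by $\mathbf{U_j}$ with the entrywise scaling by $s'$ and with the diagonal correction, neither of which commutes with $\mathbf{U_j}$ in general. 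Your substitution $u_i=(X-X_0)_{ij}\,s'(x_{ij})$ sidesteps this: it turns the first term into the exact quadratic form of $M_j$ and converts the diagonal correction into $D_j$ with entries $(\partial\mathcal{E}/\partial c)_{ij}\,s''/(s')^2$, after which Weyl's inequality $\lambda_{\min}(M_j+D_j)\ge\lambda_{\min}(M_j)+\min_i(D_j)_{ii}$ delivers the convexification inequality as precisely the sufficient condition needed. What your route buys is a transparent explanation of why the denominator $(s'_{i;j})^2$ appears in the stated inequality, which is opaque in the paper's version. Your caveat about $s'(x_{ij})=0$ is also well taken: the inequality itself divides by $(s')^2$, so saturating points must be excluded or handled by a limiting argument; the paper ignores this in the general theorem and only dodges it for ReLU later by invoking $s''=0$ separately.
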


\begin{proof}
	Let $c_{ij} = s(a_{ij})$, where $s$ is the element-wise function, such as $sigmoid, tanh$. The gradients are as below:
	\begin{small}
	\begin{eqnarray}
	& (\frac{\partial \mathcal{E}}{\partial a})_{ij} = (\frac{\partial \mathcal{E}}{\partial c})_{ij}s'(a_{ij}) \label{t2e1} \\ 
	& (\frac{\partial^2 \mathcal{E}}{\partial a^2})_{iji'j'} = (\frac{\partial^2 \mathcal{E}}{\partial c^2})_{iji'j'}s'(a_{ij})s'(a_{i'j'}) + (\frac{\partial \mathcal{E}}{\partial c})_{ij}s''(a_{ij})_{i'j'} \nonumber
	\end{eqnarray}
	\end{small}
	We notate $s'(a_{ij}) \doteq s'_{ij}$ and $s''(a_{ij})_{i'j'} \doteq s''_{iji'j'}$. Because $a_{ij}$ is unrelated to $a_{i'j'}$ when $i \neq i', j \neq j'$, the $s''_{iji'j'}$ is only non-zero regarding the position of $i=i', j=j'$. Also, assuming each column corresponds to an independent sample, $ (\frac{\partial^2 \mathcal{E}}{\partial c^2})_{iji'j'}$ and $ (\frac{\partial^2 \mathcal{E}}{\partial a^2})_{iji'j'}$ is only non-zero when $j=j'$. Thus, we reformulate the (\ref{t2e1}) given the fixed $j$ as:
	\begin{eqnarray}
	& (\frac{\partial^2 \mathcal{E}}{\partial a^2})_{ii; jj} = (\frac{\partial^2 \mathcal{E}}{\partial c^2})_{ii;jj}s'_{i;j}s'_{i;j} + (\frac{\partial \mathcal{E}}{\partial c})_{i; j}s''_{ii; jj} \label{19}
	\end{eqnarray}
	Note that, $(\frac{\partial^2 \mathcal{E}}{\partial c^2})_{jj}$ satisfies (\ref{cond}), which implies it is a positive semi-defined matrix, because $j$ indicates one independent sample. Thus, we perform the eigenvalue matrix decomposition as $(\frac{\partial^2 \mathcal{E}}{\partial c^2})_{jj} = \mathbf{U_j^T\Lambda_j U_j}$. Besides, the second item in (\ref{19}) could be reformed as a diagonal matrix $diag((\frac{\partial \mathcal{E}}{\partial c})_{i; j}s''_{ii; jj})$. Hence, we have:
	\begin{eqnarray}
	& (\frac{\partial^2 \mathcal{E}}{\partial a^2})_{ii; jj} = \mathbf{U_j^T} (\Lambda_{ii;j} s'_{i;j}s'_{i;j} + diag((\frac{\partial \mathcal{E}}{\partial c})_{i; j}s''_{ii; jj}))\mathbf{U_j} \label{20} \nonumber \\
	\end{eqnarray}
	With the satisfaction of convexification inequality, we have proved this theorem.
\end{proof}

\begin{theorem}
	Algorithms do not modify the convexity of neural graph, or the satisfaction of inequality (\ref{cond}).
\end{theorem}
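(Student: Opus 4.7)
The plan is to exploit the characterization of algorithms given in the introduction and in \cite{xiao2017hungarian}: an algorithm node is a logic-based instruction whose sole effect, after forward propagation, is to select one among its candidate inputs as the output. In this view, an algorithm node carries no continuous parameters of its own and contributes no new multiplicative factor to the chain rule --- it merely routes one input forward.

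First I would make this concrete by writing $c = a^{(k)}$, where $\{a^{(1)},\dots,a^{(K)}\}$ are the candidate inputs and $k$ is the index the algorithm's logic fixes during the forward pass. On the open region where this choice does not switch, the map from $a^{(k)}$ to $c$ is the identity while the non-selected inputs are disconnected from the objective. A one-line chain-rule computation then gives $(\partial \mathcal{E}/\partial a^{(k)})_{ij} = (\partial \mathcal{E}/\partial c)_{ij}$ and $(\partial^2 \mathcal{E}/\partial (a^{(k)})^2)_{iji'j'} = (\partial^2 \mathcal{E}/\partial c^2)_{iji'j'}$, with the corresponding derivatives through the non-selected branches vanishing. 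Substituting into (\ref{cond}) with $X=a^{(k)}$ reproduces exactly the inequality already satisfied at $c$; for $X=a^{(\ell)}$ with $\ell\neq k$ both sides are trivially zero. Hence (\ref{cond}) passes unchanged across every algorithm node.

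Next I would combine this with Theorems~1 and~2. Once the forward pass pins down every selection, the active computation reduces to a sub-graph whose remaining nodes are only operators and functions, and convexity propagation along such a sub-graph has already been established. Algorithm nodes therefore neither create nor destroy the convex condition; they simply restrict attention to whichever operator-function sub-graph the logic selects, which is precisely what the theorem asserts.

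The main obstacle I expect is conceptual rather than algebraic: treating the selected index $k$ as a constant for the backward pass is only legitimate on the open region where the logic's output does not switch. At boundary configurations where $k$ changes, the map $c \mapsto a^{(k)}$ is discontinuous and the Taylor expansion underlying Section~\ref{ein} is not well defined. I would therefore phrase the conclusion locally, asserting that (\ref{cond}) is preserved on each open piece on which the selected branch is stable, which aligns with the perspective of \cite{xiao2017hungarian} that algorithms construct the graph anew after each forward pass.
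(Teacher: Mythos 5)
Your proposal follows essentially the same route as the paper: algorithms are pure selections, so once the forward pass fixes the constructed sub-graph only operators and functions remain, and Theorems 1 and 2 apply. Your version is in fact more careful than the paper's two-sentence argument, since you make the identity chain rule across the selected branch explicit and note that the conclusion holds only locally, on regions where the selected index does not switch --- a caveat the paper omits entirely.
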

\begin{proof}
	Algorithms play a role of selection as dynamically constructing neural graph. If the constructed graph satisfies the convex condition of (\ref{cond}), the neural graph with algorithms would ensure the conclusion. Specifically, the constructed graph is only composed by operators and functions, which are characterized by Theorem 1 and 2. Thus, algorithms do not modify the convexity of neural graph.
\end{proof}

\begin{theorem}
	If the objective nodes of neural graph satisfy the condition of (\ref{cond}) and the convexification inequality holds for every function node, the objective of entire tree-structured neural graph is convex.
\end{theorem}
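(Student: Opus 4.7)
The plan is to prove this by structural induction on the tree-structured neural graph, treating the objective node as the root and the variable nodes as leaves, and propagating the satisfaction of the convex condition (\ref{cond}) backward along each edge. The base case is given by hypothesis: every objective node already satisfies (\ref{cond}). For the inductive step, consider any internal node $N$ whose successor (child in our backward-propagation view, or parent in the forward computational view) already guarantees (\ref{cond}); I would do a case analysis on the type of $N$ and invoke exactly one of the three previous theorems. If $N$ is an operator (plus, element-wise multiplication, convolution, or matrix multiplication), Theorem 1 gives that (\ref{cond}) transfers to each input of $N$. If $N$ is a function node, Theorem 2, together with the standing assumption that the convexification inequality holds at every function node, transfers (\ref{cond}) through $N$. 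If $N$ is an algorithmic node, Theorem 3 says it does not alter convexity, so the condition simply passes through.

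Iterating this inductive step down to any variable (leaf) node $X$, I obtain that (\ref{cond}) holds for $X$, i.e. the second-order form
\begin{eqnarray}
\mathbf{(X-X_0)}_j \Bigl(\frac{\partial^2 \mathcal{E}}{\partial X^2}\Bigr)_{jj'} \mathbf{(X-X_0)}_{j'}^T \ge 0 \nonumber
\end{eqnarray}
is non-negative for arbitrary perturbations around any $X_0$. By the convex definition stated in Section \ref{ein} (with the first-order inequality equivalent to the non-negativity of the quadratic form along the Taylor expansion), this is exactly the statement that $\mathcal{E}$ is convex in $X$ when all other variables are frozen. Since $X$ was an arbitrary variable node, the conclusion holds for every variable in the graph.

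The role of the tree-structure hypothesis is to supply the no-circle precondition that Theorems 1 and 2 require at \emph{every} node we visit during the induction: in a tree, a node has a unique path to the root, and distinct input branches of an operator node cannot share a common variable, so the crucial differentiation identities such as $b_{ij}$ being independent of $a_{ij}$ in the multiplication case, or $s''_{iji'j'}$ vanishing off-diagonal in the function case, continue to hold globally. Without the tree property (i.e. in the presence of a circle like $y = W\sigma(Wx)$), the chain-rule expressions invoked in Theorems 1 and 2 would pick up additional cross terms that are not controlled by the successor's convex condition.

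The main obstacle I anticipate is not the induction itself, which is essentially bookkeeping on top of Theorems 1--3, but rather being careful about the meaning of ``successor ensures (\ref{cond})'' when a node has several distinct downstream paths to the objective. In a tree this is unambiguous because each node has a single successor, but I would want to state explicitly that the condition at a node $N$ is inherited from its unique parent in the backward-propagation tree, and that the ``other variables fixed'' clause is compatible with the column-independence assumption used in Theorem 2. Once this is pinned down, the rest is a one-line traversal argument.
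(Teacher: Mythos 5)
Your proposal is correct and follows essentially the same route as the paper: an iterative (inductive) propagation of condition (\ref{cond}) from the objective node down to each variable node, dispatching on node type via Theorems 1--3, with the tree structure supplying the no-circle hypothesis. Your write-up is in fact more careful than the paper's own one-paragraph argument, particularly in spelling out why the unique-successor property of a tree makes the inheritance of (\ref{cond}) unambiguous, but it is the same proof.
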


\begin{proof}
Previously stated, neural graph is composed by three parts: operators, algorithms and functions. Algorithm dynamically constructs the graph, thus does not modify the convexity. Operators (i.e. \textit{element-wise plus/multiplication, convolution and matrix multiplication}) guarantee the convex conditions of (\ref{cond}) if the successor node is ensured and there is no circle. Specifically in this theorem, if the objective nodes ensure the condition of (\ref{cond}), the conclusion could be deducted iteratively from the objective to the input/variable nodes along the path, which establishes the proof. 
\end{proof}

\newtheorem{lem}{Lemma}

\begin{lem}
	The square loss satisfies the condition of (\ref{cond}).
\end{lem}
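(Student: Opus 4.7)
The plan is to verify (\ref{cond}) by direct computation of the Hessian of the square loss and observing that it reduces to the identity tensor. I would write the square loss as $\mathcal{E} = \tfrac{1}{2}(x_{ij}-a_{ij})(x_{ij}-a_{ij})$ under the Einstein convention of Section \ref{ein}, where $A=\{a_{ij}\}$ is the fixed target. Straightforward differentiation then gives $\left(\frac{\partial \mathcal{E}}{\partial X}\right)_{ij} = x_{ij}-a_{ij}$ and $\left(\frac{\partial^2 \mathcal{E}}{\partial X^2}\right)_{iji'j'} = \delta_{ii'}\delta_{jj'}$, so the Hessian is simply the Kronecker identity on the four indices.

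With the Hessian identified, I would substitute into the quadratic form in (\ref{cond}). Using the matrix reformulation of Section \ref{ein}, the $(j,j')$-block $\left(\frac{\partial^2 \mathcal{E}}{\partial X^2}\right)_{jj'}$ equals the identity matrix $I_n$ when $j=j'$ and vanishes otherwise. The full quadratic form therefore collapses to
\begin{equation*}
\sum_{j,j'} \mathbf{(X-X_0)}_j \left(\frac{\partial^2 \mathcal{E}}{\partial X^2}\right)_{jj'} \mathbf{(X-X_0)}_{j'}^T \;=\; \sum_{j} \|\mathbf{(X-X_0)}_j\|^2 \;=\; \|X-X_0\|_F^2 \;\ge\; 0,
\end{equation*}
which is precisely the convex condition. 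Note that no assumption on the target $A$ or on any successor node is needed, since the square loss can be treated as the root objective of the graph.

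There is no real mathematical obstacle here: the Hessian of the square loss is positive semidefinite for the trivial reason that it is the identity. The only care required is to keep the tensor-to-matrix bookkeeping of Section \ref{ein} consistent, that is, to check that the $\delta_{jj'}$ factor in the Hessian indeed corresponds to a block identity on the $j$-axis under the chosen matricization, rather than being absorbed into the $i,i'$ indices. Once this is clear, the lemma is immediate, and it provides the base case that lets Theorems 1--4 propagate the convex condition backward through the rest of a tree-structured neural graph whose objective is the square loss.
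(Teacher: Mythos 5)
Your proof is correct and follows essentially the same route as the paper's: compute the gradient $x_{ij}-a_{ij}$, observe that the Hessian is the Kronecker identity $\delta_{ii'}\delta_{jj'}$, and conclude that the quadratic form in (\ref{cond}) is nonnegative. You are somewhat more explicit than the paper in actually evaluating the quadratic form as $\|X-X_0\|_F^2$, but there is no substantive difference in approach.
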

\begin{proof}
	The square loss has a form of $l=\frac{1}{2}\sum_{ij}(y_{ij}-\hat{y}_{ij})^2$, where $y$ is the output of neural graph and $\hat{y}$ is the corresponding label. Notably, each $j$ corresponds to an independent sample, which means the second-order gradient of matrix form as $l''_{ii;jj'}$ is only non-zero when $j=j'$. Thus, the first-order gradient is $l'_{i,j} = (y_{ij} - \hat{y}_{ij})$, then the second-order gradient is $l''_{ii';jj'}=1$ only when $i=i'$, otherwise it is $0$.  In conclusion, the condition of (\ref{cond}) is ensured.
\end{proof}

\begin{lem}
	The absolute loss satisfies the condition of (\ref{cond}).
\end{lem}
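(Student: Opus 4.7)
The plan is to mimic the structure of the proof of Lemma~1 (square loss) as closely as possible, with the one complication that the absolute value is not twice differentiable at the origin. Write $l = \sum_{ij} |y_{ij} - \hat{y}_{ij}|$, and exploit the same sample-independence observation used earlier: since each column index $j$ indexes an independent sample, the second-order tensor $l''_{ii';jj'}$ is only possibly non-zero when $j = j'$, so it suffices to verify the positivity of the second-order form on each diagonal block in $j$.

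Next I would compute the gradients at points where $y_{ij} \neq \hat{y}_{ij}$. The first-order gradient is $l'_{ij} = \mathrm{sign}(y_{ij} - \hat{y}_{ij}) \in \{-1,+1\}$, and the second-order gradient is $l''_{ii';jj} = 0$ wherever the sign function is differentiable, i.e. on the complement of a set of measure zero. Substituting into the convex condition (\ref{cond}) yields
\begin{eqnarray}
\mathbf{(X-X_0)}_j \Bigl(\tfrac{\partial^2 \mathcal{E}}{\partial A^2}\Bigr)_{jj'} \mathbf{(X-X_0)}_{j'}^T = 0 \ge 0,
\end{eqnarray}
so the condition holds trivially on the generic stratum where no residual is exactly zero.

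The main obstacle is the kink at $y_{ij} = \hat{y}_{ij}$, where the second derivative does not exist classically. I would handle this in either of two equivalent ways: (i) replace the classical Hessian by the distributional one, noting that $|x|''$ is the non-negative Dirac measure $2\delta_0$, so the quadratic form of (\ref{cond}) is a sum of non-negative contributions; or (ii) approximate $|x|$ by a smooth convex surrogate such as the Huber function $h_\epsilon$, apply Lemma~1 style computations to obtain a non-negative second-order form, and then take $\epsilon \to 0$. Either route establishes that the absolute loss satisfies the convex condition (\ref{cond}) in the appropriate (sub)differential sense, which is all that is needed to feed the loss into the propagation chain of Theorems~1--4.
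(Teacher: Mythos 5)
Your proof takes essentially the same route as the paper's: the first-order gradient is the sign function, the second-order gradient vanishes, so the quadratic form in (\ref{cond}) is identically zero and the condition holds trivially. The only difference is that you explicitly address the kink at $y_{ij}=\hat{y}_{ij}$ (via the distributional Hessian $2\delta_0$ or Huber smoothing), a point the paper silently glosses over; this is added rigor rather than a different argument.
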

\begin{proof}
	The absolute loss has a form of $l=\sum_{ij}|y_{ij}-\hat{y}_{ij}|$, where $y, \hat{y}$ indicate the output and ground-truth label of neural graph, respectively. Similarly, the first-order gradient is $l'_{i;j}=sign(y_{ij} - \hat{y}_{ij})$ where $sign$ is the sign function, then the second-order gradient is $l''_{ii';jj'} = 0$. The situations is similar to Lemma 1, which ends the proof.
\end{proof}

\begin{lem}
	The cross entropy satisfies the condition of (\ref{cond}).
\end{lem}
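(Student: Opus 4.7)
The plan is to follow the same template the author used for Lemmas 1 and 2: write the cross entropy in the entrywise form that matches the paper's batch convention (each column $j$ is an independent sample), compute $l'_{i;j}$ and $l''_{ii';jj'}$ explicitly, and then show that the resulting second-order tensor, when contracted as in (\ref{cond}), is non-negative.

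First I would fix the form of cross entropy as $l = -\sum_{ij}\hat{y}_{ij}\log y_{ij}$, where $y_{ij}\in(0,1]$ is the (already normalized) output of the graph for the $i$-th class of the $j$-th sample and $\hat{y}_{ij}\ge 0$ is the corresponding label mass. The independent-samples assumption immediately implies $l''_{ii';jj'}=0$ whenever $j\neq j'$, exactly as in Lemmas 1 and 2, so the argument reduces to treating each column separately.

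Next I would differentiate: the first-order gradient is $l'_{i;j}=-\hat{y}_{ij}/y_{ij}$, and since $\partial y_{ij}/\partial y_{i'j}=0$ for $i\neq i'$ the second-order gradient is diagonal within each sample block, namely $l''_{ii';jj}=\hat{y}_{ij}/y_{ij}^{2}$ if $i=i'$ and $0$ otherwise. Because $\hat{y}_{ij}\ge 0$ and $y_{ij}^2>0$, each diagonal block $(\partial^2\mathcal{E}/\partial y^2)_{jj}$ is a non-negative diagonal matrix and hence positive semi-definite. Substituting into (\ref{cond}) then gives a sum of non-negative terms $\sum_{ij}\hat{y}_{ij}(y_{ij}-y_{0,ij})^2/y_{ij}^{2}\ge 0$, which is exactly the convex condition.

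The main obstacle, as far as I can see, is purely a matter of conventions rather than mathematics: I need to make sure the expression for cross entropy I adopt is consistent with the paper's assumption that each column is an independent sample and with its treatment of element-wise losses as entire \emph{function} nodes (so that the softmax, if any, is packaged inside the loss rather than standing as a separate non-linearity). If instead one wanted to cover softmax cross entropy on logits, the computation would be slightly more involved, giving the classical PSD Hessian $\mathrm{diag}(p_j)-p_jp_j^{\top}$ per sample, whose positive semi-definiteness follows from Jensen's inequality applied to $\langle p,v\rangle^2\le\langle p,v^2\rangle$; I would include this as a short remark so that the lemma covers both standard usages.
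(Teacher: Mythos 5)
Your proof is correct and takes essentially the same route as the paper's: compute the per-sample gradient and Hessian of the loss, observe that the Hessian block $(\partial^2\mathcal{E}/\partial y^2)_{jj}$ is diagonal with non-negative entries, and conclude that the quadratic form in (\ref{cond}) is non-negative. The only difference is the variant of cross entropy differentiated --- the paper uses the binary form $-\sum_i \hat{y}_i\log y_i+(1-\hat{y}_i)\log(1-y_i)$, giving diagonal entries $\hat{y}_i/y_i^2+(1-\hat{y}_i)/(1-y_i)^2\ge 0$, whereas you use the categorical form $-\sum_{ij}\hat{y}_{ij}\log y_{ij}$ --- which changes nothing in the argument, and your added remark on the softmax-on-logits case is a harmless extension beyond what the paper proves.
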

\begin{proof}
	Regarding the second-order form, $l'_{ii';jj'}$ is only non-zero when $j=j'$, because $j$ corresponds to one independent sample. Thus, we omit the subscript $j$. The cross entropy has a form as $-\sum_i \hat{y}_i log (y_i) + (1-\hat{y}_i) log (1 - y_i)$, where $y, \hat{y}$ indicate the output and ground-truth label of neural graph, respectively. Thus, the first-order gradient is $l'_i = -\frac{\hat{y}_i}{y_i} + \frac{1-\hat{y}_i}{1-y_i}$, while the second-order gradient is $l''_{ii'} = \frac{\hat{y}_i}{y_i^2} + \frac{1-\hat{y}_i}{(1-y_i)^2}$ when $i=i'$, otherwise $0$. Because $y_i, \hat{y}_i \in [0,1]$, the second-order gradient is non-zero.  In conclusion, cross entropy satisfies the condition of (\ref{cond}).
\end{proof}

\begin{theorem}
	If there exists no circle in graph and the convexification inequality holds for every function node, the objective of neural graph with conventional loss (i.e. \textit{square, absolute and cross entropy}) is convex to each variable.
\end{theorem}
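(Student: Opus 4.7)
The plan is to reduce this statement to a direct application of Theorem 4, using Lemmas 1--3 to supply its remaining hypothesis for each of the three admissible losses. First I would observe that ``no circle in the graph'' is the same structural assumption as ``tree-structured'' in Theorem 4: once the computation graph has no loop, the product-rule arguments in Theorem 1 (for element-wise multiplication, convolution and matrix product) go through without the cross terms that a repeated symbol would inject, so the convex condition (\ref{cond}) can be propagated backward along every edge of the graph.

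Next I would note that the convexification inequality is assumed to hold at every function node, so the function-node hypothesis of Theorem 4 is already in place. What remains is to check that the objective node itself satisfies (\ref{cond}) for each of the three listed losses, and this is exactly the content of Lemmas 1, 2 and 3. For the square loss, the second-order array is diagonal with unit entries on the sample axis (Lemma 1); for the absolute loss it vanishes (Lemma 2); and for the cross entropy it is diagonal with strictly positive entries on the sample axis (Lemma 3). In each case, the quadratic form on the left of (\ref{cond}) reduces to a nonnegative sum of squares, as already verified in the proofs of those lemmas.

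With both hypotheses of Theorem 4 now met, Theorem 4 gives that the objective is convex in every variable node when the remaining variables are held fixed, while Theorem 3 guarantees that any interspersed algorithm nodes do not disturb the conclusion. The only step requiring care is the first: I would make explicit that the paper's notion of a ``circle'' (two inputs of a node depending on the same symbol, as in the illustrative example $y=W\sigma(Wx)$) is precisely what is forbidden by a tree structure in the sense of Theorem 4. Once this identification is spelled out, the theorem is essentially a corollary that packages Lemmas 1--3 into Theorem 4, and the proof is complete.
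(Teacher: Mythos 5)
Your proposal is correct and matches the paper's intent exactly: the paper states this theorem without an explicit proof, leaving it as the evident corollary of Theorem 4 (backward propagation of condition (\ref{cond}) through operator, function, and algorithm nodes) combined with Lemmas 1--3, which verify (\ref{cond}) at the square, absolute, and cross-entropy loss nodes respectively. Your explicit identification of ``no circle'' with ``tree-structured'' is a reasonable and faithful reading of the paper's terminology, so nothing further is needed.
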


\section{Scale Mechanism for Convexification}
Scale mechanism is the technique to modify the neural graph slightly for making it convex. The non-convex property stems from circles and functions, which could be modified for convexification by factor scaling in the manner of this section.

\subsection{Circles}
Circles could not exist in our framework, because there are more residual items in the second-order form. To begin, we notice there exist many paths from the loss to a symbol $a$, the successor nodes of which are noted as $p_1, p_2, ...p_k$. Thus, the second-order of $a$ could be composed by many similar items, such as (using Einstein's convention):
\begin{eqnarray}
\frac{\partial^2 \mathcal{E}}{\partial a^2} & = & \frac{\partial}{\partial a}(\frac{\partial \mathcal{E}}{\partial p_r} \frac{\partial p_r}{\partial a}) \\
& = & \frac{\partial}{\partial p_s}(\frac{\partial \mathcal{E}}{\partial p_r} \frac{\partial p_r}{\partial a}) \frac{\partial p_s}{\partial a} \\
& =& \frac{\partial^2 \mathcal{E}}{\partial p_s \partial p_r} \frac{\partial p_r}{\partial a} \frac{\partial p_s}{\partial a} + \frac{\partial \mathcal{E}}{\partial p_r} \frac{\partial p_s}{\partial a} \frac{\partial^2 p_r}{\partial p_s \partial a}
\end{eqnarray}
There could be many possible forms of $p_s, p_r$, but conventional neural graph only takes the recursive form such as RNN in the case of circle. Simply, $p_s \rightarrow p_r \rightarrow ...$ if $s < r$:
\begin{eqnarray}
\frac{\partial^2 \mathcal{E}}{\partial a^2} & = & \sum_{r=s}  \frac{\partial^2 \mathcal{E}}{\partial p_r^2} \frac{\partial p_r}{\partial a} \frac{\partial p_r}{\partial a} \\
& + &  \mathop{\underline{\sum_{s < r} \left( 2(\frac{\partial^2 \mathcal{E}}{\partial p_r^2} \frac{\partial p_r}{\partial a} \frac{\partial p_s}{\partial a} \frac{\partial p_r}{\partial p_s}) + 
\frac{\partial \mathcal{E}}{\partial p_r} \frac{\partial p_s}{\partial a} \frac{\partial^2 p_r}{\partial p_s \partial a} \right)}}_{Residual}  \nonumber
\end{eqnarray}
For the first term, we have discussed the situation in Theorem 1 and 2. Regarding the second term, we transform $p_r(p_s)$ into $p_r(\delta p_s)$, thus the residual term is as below:
\begin{eqnarray}
\delta\sum_{r > s} \left( 2(\frac{\partial^2 \mathcal{E}}{\partial p_r^2} \frac{\partial p_r}{\partial a} \frac{\partial p_s}{\partial a} \frac{\partial p_r}{\partial p_s}) + 
		\frac{\partial \mathcal{E}}{\partial p_r} \frac{\partial p_s}{\partial a} \frac{\partial^2 p_r}{\partial p_s \partial a} \right)å
\end{eqnarray}
If the scaling factor is sufficiently small, the residual item is sufficiently insignificant or the main component is sufficiently advantageous. In this manner, the circle in the recursive case guarantees the convex condition of (\ref{cond}). \textit{\textbf{Notably, we could multiply the scaling factor in any node along the path from $b$ to $a$.}}

\subsection{Functions}
There are three conventional non-linear functions in neural graph: \textit{ReLU, sigmoid} and \textit{tanh}, which will be discussed in this subsection with convexification inequality. Notably, novel non-linear functions could be analyzed in the same manner.

\textbf{(1.) ReLU.} ReLU (Rectified Linear Units, \cite{Glorot2012Deep}) takes the form of $relu(x) = max(x, 0)$, the second-order item of which is zero. Thus, ReLU naturally satisfies the convexification inequality and ensures the condition of (\ref{cond}).

\textbf{(2.) Sigmoid.} Sigmoid is the most classical non-linear function as $\sigma(x) = \frac{1}{1+e^{-x}}$. The first- and second-order gradient items are $\sigma' = \sigma(1-\sigma)$ and $\sigma''=(1-2\sigma)\sigma'$, respectively. Thus, the convexification inequality is as below:
\begin{eqnarray}
\forall j \in [1...m]:\;\;\lambda_{min}(\frac{\partial^2 \mathcal{E}}{\partial c^2})_{jj} + \min_{i=1}^n \frac{(\frac{\partial \mathcal{E}}{\partial c})_{i; j}(1-2\sigma_i)}{\sigma_i(1-\sigma_i)} \ge 0
\end{eqnarray}
Actually, the domain of $\sigma$ is often $[-a, +a]$ (e.g. $[-1,+1]$), thus the value range is $[\sigma(-a),\sigma(a)]$. What we expect is $\sigma$ approaches $\frac{1}{2}$, where $x$ of $\sigma(x)$ approaches $0$. Thus, we scale $\sigma(x)$ as $\sigma(\delta x)$, the domain of which is $[-\delta a, \delta a]$. If the scaling factor $\delta$ is sufficiently small, the domain is nearly a neighbor of origin, where the second item of convexification inequality is nearly insignificant. In this manner, the condition of (\ref{cond}) is guaranteed.

\textbf{(3.) Tanh.} Tanh is the other classical non-linear function as $\sigma(x)=\frac{e^x-e^{-x}}{e^x+e^{-x}}$, the analysis of which is similar with \textit{sigmoid}. Firstly, we calculate the first- and second-order gradient items as $\sigma'=1-\sigma^2$ and $\sigma'' = -2 \sigma\sigma'$. Secondly, we present the convexification inequality:
\begin{eqnarray}
\forall j \in [1...m]:\;\;\lambda_{min}(\frac{\partial^2 \mathcal{E}}{\partial c^2})_{jj} + \min_{i=1}^n \frac{(\frac{\partial \mathcal{E}}{\partial c})_{i; j}(-2\sigma_i)}{(1-\sigma_i^2)} \ge 0
\end{eqnarray}
Finally, we scale the non-linear function as $\sigma(\delta x)$, the domain of which is $[-\delta a, + \delta a]$ (where the original domain is $[-a,+a]$). Because the domain scales into the neighbor of origin, $\sigma$ approaches $0$, which makes the second item insignificant. In this manner, the condition of (\ref{cond}) is ensured.

\subsection{Why Scale Works?}
To demonstrate the convexification inequality and factor scaling mechanism,  we take an example of $f(x)=sin^2(x)$. Regarding the node \textit{``sin''}, $\frac{\partial^2 \mathcal{E}}{\partial (sin(x))^2} = 2$, $\frac{\partial \mathcal{E}}{\partial sin(x)} = 2sin(x)$, $sin''(x) = -sin(x)$ and $sin'(x)=cos(x)$. Thus, we get the convexification inequality such as: 
$$ 2 + \frac{-2sin^2(x)}{cos^2(x)}  \ngeq 0, \;\; x \in [-2, +2]$$
where $x \rightarrow \pm 2$ challenges the inequality, which means the composed function is possibly non-convex. However, when we calculate the scaling factor as $\delta = 0.3$, the scaling leads to a convex function in $[-2, +2]$ as shown in Figure \ref{fig:cie}. \textit{\textbf{Essentially, factor scaling weakens the effect of inner non-convex function, thus outer convex parts could dominate the convexity.}} Specifically, the non-convexity of inner ``$sin(x)$'' is depressed relatively to the outer convex ``$x^2$''.

\begin{figure}[H]
	\centering
	\includegraphics[width=0.85\linewidth]{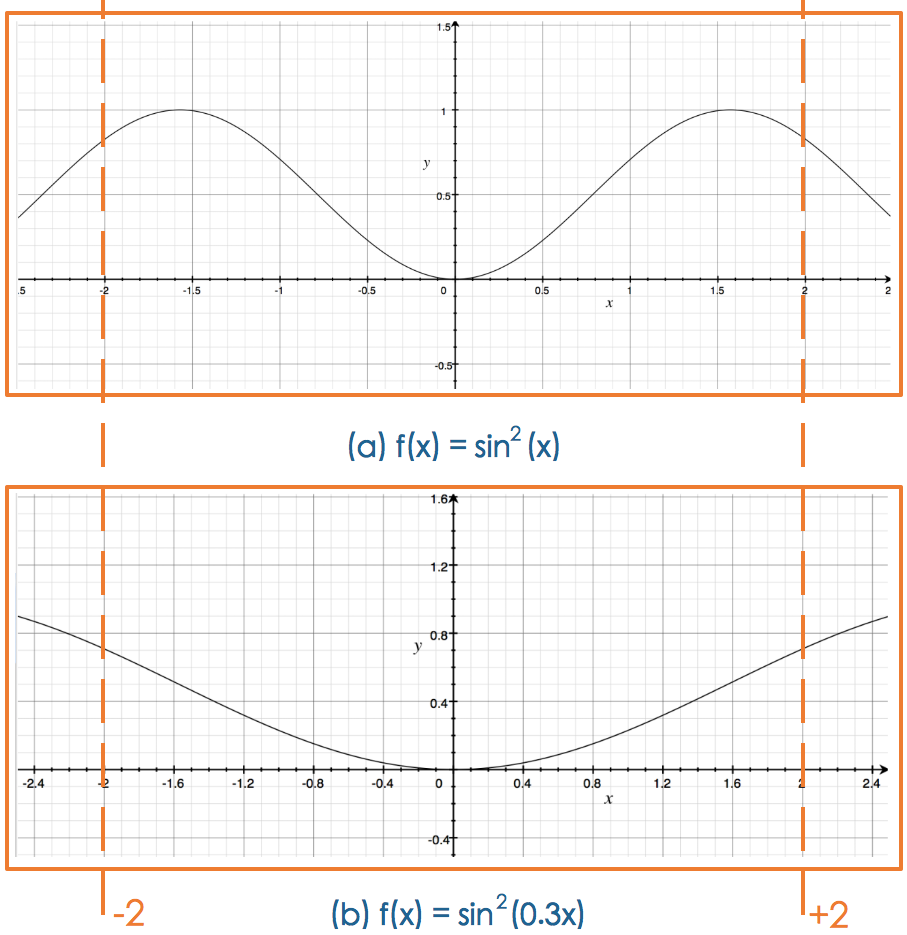}
	\caption{Illustration of Convexification Inequality and Factor Scaling Mechanism.}
	\label{fig:cie}
\end{figure}

It is worthy to note that convexification inequality is a sufficient condition for convexity, not a necessary one. Thus, the practical scaling factor only needs to be sufficiently small but does not need to strictly satisfy this inequality.

\section{Experiment}
To verify the scale mechanism, we conduct a practical MLP-based graph for image recognition on the dataset of MNIST \cite{Lecun1998Gradient} and LSTM-based graph for the task of sentence matching on the dataset of ``Quora Paraphrase Dataset'' \cite{xiao2017hungarian}. Besides, we also have conducted two tasks of variance reduction and faster convergence to testify our theoretical analysis. 

\subsection{Experimental Setting}
\textbf{Dataset.} To testify our theory in a practically large-scale scenario, we choose two datasets: MNIST and ``Quora Paraphrase Dataset''. The MNIST dataset \cite{Lecun1998Gradient} is a classic benchmark dataset, which consists of handwritten digit images, 28 x 28 pixels in size, organized into 10 classes (0 to 9) with 60,000 training and 10,000 test samples. ``Quora Question Pairs''\footnote{The url of the dataset: \url{https://data.quora.com}} \cite{xiao2017hungarian} aims at paraphrase identification. Specifically, There are over 400,000 question pairs in this dataset, and each question pair is annotated with a binary value indicating whether the two questions are paraphrase of each other or not. We choose 10,000 samples respectively as developing/testing set and select 40,000 instances as the training set.

\textbf{Baseline \& Scale Mechanism.} Different graphs are processed differently. Thus, we discuss the baselines and the corresponding scale mechanism for the specific method. Notably, both the baselines are applied the AdaDelta \cite{Zeiler2012ADADELTA} with moment as $0.6$ and regularization as $1.0 \times 10^{-6}$. 

Regarding MNIST, we apply a Multiple Layer Perception (MLP) with structures \textit{``input-300-100-output''} and \textit{sigmoid} as non-linear function. There is no circle in this case, thus we directly scale all the non-linear functions with different factors $\delta$. We train the model until convergence but at most 1,000 rounds and attempt five times of training-testing run to obtain the accuracy and variance.

Regarding ``Quora'', we apply the Siamese LSTM \cite{Wang2017Bilateral} as our baseline. Specifically, Siamese LSTM encodes the input two sentences into two sentence vectors by LSTM, respectively, \cite{Wang2016Semi}. Based on the two sentence vectors, a cosine similarity is leveraged to make the final decision. Specifically, we initialize the word embedding with 300-dimensional GloVe \cite{Pennington2014Glove} word vectors pre-trained in the 840B Common Crawl corpus \cite{Pennington2014Glove} and then set the hidden dimension as 100 for each LSTM. 

There are many recursive circles in LSTM, where the $\mathbf{Wh}$ in each gate is a classic circle, because $\mathbf{h}$ is a function of $\mathbf{W}$. Fortunately, all the paths from $\mathbf{W}$ to $\mathbf{h}$ pass the non-linear functions, which means, only scaling the non-linear function could fix the issues of both circles and functions.

\subsection{Variance Reduction} 
Actually, multiple runs with random initialization could discover various saddle/local/global minimums, which generates the variance of accuracy. Scale mechanism could invoke a more convex surface, which could be expected to reduce the variance of accuracy in theory. In fact, stabilization is critical to industrial application, which is the significance of accuracy variance reduction.

Regarding the experimental protocol, we have tried four hyper-parameters of scaling factor $\delta$: baseline $1.0$, slight scale $0.9$, medium scale $0.5$ and destroying scale $0.3$. The corresponding results are present in Table \ref{performance}. Thus, we have concluded as:
\begin{enumerate}
	\item The standard deviation of medium setting is better than the baseline, which justifies our theory. Specifically, the corresponding objective is more convex. In this way, the scale mechanism could stabilize the solution to some extent.
	\item Destroying setting is an extreme case, where the loss surface could be very singular, which increases the variance.
	\item It is also worthy to note that the trade-off of scaling factor and non-linear function should be carefully tuned. Specifically, smaller scale could lead to more convex surface but make an uncertainty effect on accuracy. We suggest to apply a medium scaling factor, which takes care of both convexity and accuracy.
\end{enumerate}

\begin{table}[H]
	\caption{Performance for MNIST with different factors $\delta$.}
	\centering
	\renewcommand\arraystretch{1.1}
	\label{performance}
	\begin{tabular}{c|c|c|c|c}
		\hline $\delta$ & \textbf{Accuracy} & \textbf{Std.} ($10^{-3}$) & \textbf{Max.} & \textbf{Min.} \\
		\hline 
		\hline 1.0 & \textbf{0.9720} & 4.649 & \textbf{0.9770} & 0.9641 \\
		\hline 0.9 & 0.9717 & 3.835 & 0.9752 & 0.9693 \\
		\hline 0.5 & 0.9709 & \textbf{1.006} & 0.9719 & \textbf{0.9695} \\
		\hline 0.3 & 0.9624 & 2.649 & 0.9695 & 0.9607 \\
		\hline
	\end{tabular}
\end{table}

\subsection{Faster Convergence}
In theory, non-convexity would make the path from initialization to saddle/local/global minimums very bumpy, which slows the convergence down to some degree. Scale mechanism could generate a more convex objective, which could smooth the path of gradient descent methods. In this manner, each step of gradient descent method makes more effects, thus a faster convergence rate could be expected.

\begin{table}[H]
	\caption{Performance for Quora with different factors $\delta$, where ``Conv.'' is short for convergence. }
	\centering
	\renewcommand\arraystretch{1.1}
	\label{converagence}
	\begin{tabular}{c|c|c|c|c}
		\hline $\delta$ & 1.0 & 0.8 & 0.5 & 0.1 \\
		\hline 
		\hline \textbf{Accuracy} & 0.7273 & 0.7274 & 0.7290 & \textbf{0.7301} \\
		\hline \textbf{Conv. Epoch} & 12 & \textbf{10} & 11 & 11 \\
		\hline
	\end{tabular}
\end{table}

Regarding the experimental protocol, we have tried four hyper-parameters of scaling factor $\delta$: baseline $1.0$, slight scale $0.8$, medium scale $0.5$ and extreme scale $0.1$, while we test the model in each training round and record the accuracies. The corresponding results are present in Figure \ref{fig:faster} and Table \ref{converagence}. Thus, we have concluded as:

\begin{enumerate}
	\item The blue line is the baseline, while other lines perform much better than this line in the pre-converged period, which means the scale mechanism indeed speeds the convergence up. This result justifies our theoretical analysis.
	\item The converged epoch of baseline is largest, which means scale mechanism really accelerates the training process.
	\item The purple line (i.e. $\delta=0.1$) always performs better than the baseline and achieves a significantly better accuracy than other settings. This result demonstrates the convexity constructed by scale mechanism could lead to a better local or even the global minimum. 
\end{enumerate}

\begin{figure}[H]
	\centering
	\includegraphics[width=0.95\linewidth]{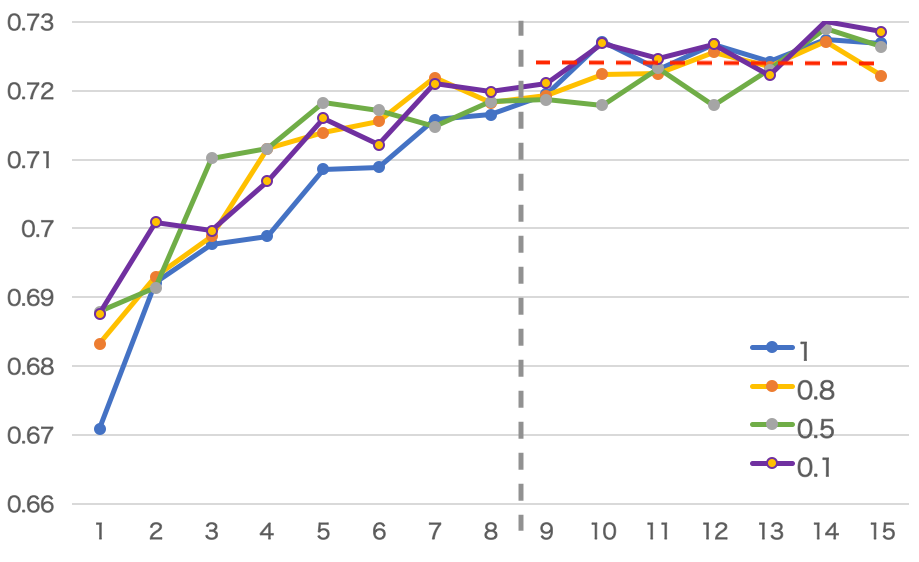}
	\caption{The accuracies of each rounds. The left of gray dashed line is the pre-converged period, while the right part corresponds to the converged period. }
	\label{fig:faster}
\end{figure}


\section{Conjectured Landscape}
In the end of this paper, we formally propose \textbf{\textit{the conjecture for the landscape of neural graph}}, which says that the objective of neural graph is nearly convex from an overview perspective and each local minimum is nearly a global minimum.

Regarding the first statement, we prove the conventional neural graph with scale mechanism is indeed nearly convex in the theoretical and experimental manner. In fact, from our proof, we could intuitively conclude that the neural graph without scale mechanism could still be approximately convex. We hope this conclusion could be generalized to the general case, which is beyond tree-structured and recursive graph. 

Regarding the second statement, we formulate this concept with concentration inequality, such as 
\begin{eqnarray}
	\mathcal{P}(max_{x_i \in C} |f(x_i) - f(x_{*})| > \epsilon) < Ae^{-B\epsilon}
\end{eqnarray} 
where $C$ is the set of local minimums, $x_{*}$ is the global minimum, $f$ is the objective of neural graph and $A, B$ are some parameters independent with $f, x, \epsilon$. We conjecture this inequality is satisfied for the set of neural graphs without proof. We hope future work could prove/deny this conjecture.

\section{Conclusion}
In this paper, we propose an iterative method to analyze the convexity of neural graph. By the proof, we conclude that only the circles and non-linear functions could invoke non-convexity. Thus, we design the scale mechanism to transform a neural graph into a convex form. Experimentally, we demonstrate the scale mechanism could stabilize the accuracy, promote the probability for arriving at the global minimum and speed up the convergence, which justifies our theory.

\newpage

\bibliography{Ref}
\bibliographystyle{named}

\end{document}